\newcommand{\fw}{F}
\newcommand{\fwb}{G}
\newcommand{\allf}{\mathcal{F}}
\newcommand{\args}{A}
\newcommand{\uargs}{\mathcal{U}}
\newcommand{\paf}{\textit{M}}
\newcommand{\alp}{\textit{ALP}}
\newcommand{\aafs}{I}
\newcommand{\attrel}{\rightsquigarrow}
\newcommand{\omoveattack}[2]{\ensuremath{\textbf{OPP:}\hspace{.2em}#1\attrel #2}}
\newcommand{\hpmoveattack}[2]{\ensuremath{\textbf{PRO:}\hspace{.2em}#1 \attrel^{+} #2}}
\newcommand{\hpmoveattackneg}[2]{\ensuremath{\textbf{PRO:}\hspace{.2em}#1 \attrel^{-} #2}}
\newcommand{\omoveok}{\ensuremath{\textbf{OPP:\hspace{.2em}ok}}}
\newcommand{\pmovewin}{\ensuremath{\textbf{PRO:\hspace{.2em}win}}}
\newtheorem{definition}{Definition}
\newtheorem{thm}{Theorem}
\newtheorem{prp}{Proposition}
\newtheorem{lemm}{Lemma}
\newtheorem{example}{Example}
\newtheorem{remark}{Remark}
\newcommand{\negf}{\mathord{\sim}}
\begin{document}

\title{Abduction and Dialogical Proof \\ in Argumentation and Logic Programming}

\author{Richard Booth\footnote{\hspace{-4pt}Computer Science and Communication, University of Luxembourg (richard.booth@uni.lu, tjitze.rienstra@uni.lu, leon.vandertorre@uni.lu)} \and \hspace{-6pt} Dov Gabbay$^{1,}$\footnote{Dept. Computer Science, King's College London (dov.gabbay@kcl.ac.uk)} \and \hspace{-6pt} Souhila Kaci\footnote{LIRMM, University of Montpellier 2 (souhila.kaci@lirmm.fr)} \and \hspace{-4pt}Tjitze Rienstra$^{1,3}$ \hspace{-7pt} \and \hspace{-4pt}Leendert van der Torre$^{1}$}

\maketitle
\bibliographystyle{ecai2014}

\begin{abstract}
We develop a model of abduction in abstract argumentation, where changes to an argumentation framework act as hypotheses to explain the support of an observation.
We present dialogical proof theories for the main decision problems (i.e., finding hypotheses that explain skeptical/credulous support) and we show that our model can be instantiated on the basis of abductive logic programs.
\end{abstract}

\section{Introduction}

In the context of abstract argumentation~\cite{DBLP:journals/ai/Dung95}, abduction can be seen as the problem of finding \emph{changes} to an argumentation framework (or \emph{AF} for short) with the goal of explaining observations that can be justified by making arguments accepted.
The general problem of whether and how an AF can be changed with the goal of changing the status of arguments has been studied by Baumann and Brewka~\cite{DBLP:conf/comma/BaumannB10}, who called it the \emph{enforcing} problem, as well as Bisquert et al.~\cite{DBLP:conf/sum/BisquertCSL13}, Perotti et al.~\cite{DBLP:conf/tafa/BoellaGPTV11} and Kontarinis et al.~\cite{kontarinis2013rewriting}. 
None of these works, however, made any explicit link with abduction.
Sakama~\cite{sakama}, on the other hand, explicitly focused on abduction, and presented a model in which additions as well as removals of arguments from an abstract AF act as explanations for the observation that an argument is accepted or rejected.

While Sakama did address computation in his framework, his method was based on translating abstract AFs into logic programs.
Proof theories in argumentation are, however, often formulated as \emph{dialogical} proof theories, which aim at relating the problem they address with stereotypical patterns found in real world dialogue.
For example, proof theories for skeptical/credulous acceptance have been modelled as dialogues in which a proponent persuades an opponent to accept the necessity/possibility of an argument~\cite{Modgil2009}, while credulous acceptance has also been related to Socratic style dialogue~\cite{caminada2010preferred}.
Thus, the question of how decision problems in abduction in argumentation can similarly be modelled as dialogues remains open.

Furthermore, argumentation is often used as an abstract model for non-monotonic reasoning formalisms.
For example, an \emph{instantiated} AF can be generated on the basis of a logic program.
Consequences can then be computed by looking at the extensions of the instantiated AF~\cite{DBLP:journals/ai/Dung95}.
In the context of abduction, one may ask whether a model of abduction in argumentation can similarly be seen as an abstraction of \emph{abductive} logic programming.
Sakama, however, did not explore the instantiation of his model, meaning that this question too remains open.

This brings us to the contribution of this paper.
We first present a model of abduction in abstract argumentation, based on the notion of an AAF (abductive argumentation framework) that encodes different possible changes to an AF, each of which may act as a hypothesis to explain an observation that can be justified by making an argument accepted.
We then do two things:

\begin{enumerate}
\item We present sound and complete dialogical proof procedures for the main decision problems, i.e., finding hypotheses that explain skeptical/credulous acceptance of arguments in support of an observation.
These proof procedures show that the problem of abduction is related to an extended form of persuasion, where the proponent 
uses \emph{hypothetical} moves to persuade the opponent.

\item We show that AAFs can be instantiated by ALPs (abductive logic programs) in such a way that the hypotheses generated for an observation by the ALP can be computed by translating the ALP into an AAF. 
The type of ALPs we focus on are based on Sakama and Inoue's model of \emph{extended} abduction~\cite{DBLP:conf/ijcai/InoueS95,DBLP:journals/amai/InoueS99}, in which hypotheses have a positive as well as a negative element (i.e., facts added to the logic program as well as facts removed from it).
\end{enumerate}

In sum, our contribution is a model of abduction in argumentation with dialogical proof theories for the main decision problems, which can be seen as an abstraction of abduction in logic programming.

The overview of this paper is as follows.
After introducing the necessary preliminaries in section~\ref{sec:preliminaries} we present in section~\ref{sec:preferentialafs} our model of abduction in argumentation.
In section~\ref{sec:explanationdialogues} we present dialogical proof procedures for the main decision problems (explaining skeptical/credulous acceptance).
In section~\ref{sec:abductioninlp} we show that our model of abduction can be used to instantiate abduction in logic programming.
We discuss related work in section~\ref{sec:relatedwork} and conclude in section~\ref{sec:conclusion}.

\section{Preliminaries}\label{sec:preliminaries}

An argumentation framework consists of a set $\args$ of arguments and a binary \emph{attack} relation $\attrel$ over $\args$~\cite{DBLP:journals/ai/Dung95}.
We assume in this paper that $\args$ is a finite subset of a fixed set $\uargs$ called the \emph{universe of arguments}.

\begin{definition}
Given a countably infinite set $\uargs$ called the \emph{universe of arguments}, an \emph{argumentation framework} (\emph{AF}, for short) is a pair $\fw = (\args, \attrel)$ where $\args$ is a finite subset of $\uargs$ and $\attrel$ a binary relation over $\args$.
If $a \attrel b$ we say that $a$ \emph{attacks} $b$.
$\allf$ denotes the set of all AFs.
\end{definition}

\emph{Extensions} are sets of arguments that represent different viewpoints on the acceptance of the arguments of an AF.
A \emph{semantics} is a method to select extensions that qualify as somehow justifiable.
We focus on one of the most basic ones, namely the \emph{complete} semantics~\cite{DBLP:journals/ai/Dung95}. 

\begin{definition}\label{defn:semantics}
Let $\fw = (\args, \attrel)$.
An \emph{extension} of $\fw$ is a set $E \subseteq \args$.
An extension $E$ is \emph{conflict-free} iff for no $a, b \in E$ it holds that $a \attrel b$.
An argument $a \in \args$ is \emph{defended} by $E$ iff for all $b$ such that $b \attrel a$ there is a $c \in E$ such that $c \attrel b$.
Given an extension $E$, we define $\textit{Def}_{\fw}(E)$ by $\textit{Def}_{\fw}(E) = \{ a \in \args \mid E\mbox{ defends }a\}$.
An extension $E$ is \emph{admissible} iff $E$ is conflict-free and $E \subseteq \textit{Def}_{\fw}(E)$, and \emph{complete} iff $E$ is conflict-free and $E = \textit{Def}_{\fw}(E)$.
The set of complete extension of $\fw$ will be denoted by $Co(\fw)$.
Furthermore, the grounded extension (denoted by $Gr(\fw)$) is the unique minimal (w.r.t. $\subseteq$) complete extension of $\fw$.
\end{definition}

An argument is said to be \emph{skeptically} (resp. \emph{credulously}) accepted iff it is a member of all (resp. some) complete extensions.
Note that the set of skeptically accepted arguments coincides with the grounded extension.
Furthermore, an argument is a member of a complete extension iff it is a member of a \emph{preferred} extension, which is a maximal (w.r.t. $\subseteq$) complete extension.
Consequently, credulous acceptance under the preferred semantics (as studied e.g. in~\cite{Modgil2009}) coincides with credulous acceptance under the complete semantics.

\section{Abductive AFs}\label{sec:preferentialafs}

Abduction is a form of reasoning that goes from an observation to a hypothesis.
We assume that an observation translates into a set $X \subseteq \args$.
Intuitively, $X$ is a set of arguments that each individually support the observation.
If at least one argument $x \in X$ is skeptically (resp. credulously) accepted, we say that the observation $X$ is skeptically (resp. credulously) \emph{supported}.

\begin{definition}
Given an AF $\fw = (\args, \attrel)$, an observation $X \subseteq \args$ is skeptically (resp. credulously) supported iff for all (resp. some) $E \in Co(\fw)$ it holds that $x \in E$ for some $x \in X$.
\end{definition}

The following proposition implies that checking whether an observation $X$ is skeptically supported can be done by checking whether an individual argument $x \in X$ is in the grounded extension.

\begin{prp}\label{prp:individualobservation}
Let $\fw = (\args, \attrel)$ and $X \subseteq \args$.
It holds that $\fw$ skeptically supports $X$ iff $x \in Gr(\fw)$ for some $x \in X$.
\end{prp}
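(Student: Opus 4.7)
The plan is to exploit two basic facts about the grounded extension recalled in Definition~\ref{defn:semantics}: (i) $Gr(\fw)$ is itself a complete extension of $\fw$, and (ii) $Gr(\fw)$ is contained in every complete extension of $\fw$ (since it is the minimal one with respect to $\subseteq$). With these in hand, both directions reduce to an immediate unfolding of the definition of skeptical support.

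For the $(\Leftarrow)$ direction, I would assume that some $x \in X$ lies in $Gr(\fw)$. Applying fact (ii), $x$ then belongs to every $E \in Co(\fw)$, which witnesses the ``some $x \in X$'' clause in the definition of skeptical support for each complete extension. Hence $\fw$ skeptically supports $X$.

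For the $(\Rightarrow)$ direction, I would simply instantiate the universal quantifier over $Co(\fw)$ in the hypothesis at the particular extension $E = Gr(\fw)$, which is legitimate by fact (i). This yields some $x \in X$ with $x \in Gr(\fw)$, as required.

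There is no real obstacle: the argument is a one-line observation in each direction, and the only ``content'' is the reminder that the grounded extension is both complete and contained in every complete extension, which is noted in Definition~\ref{defn:semantics} and is standard from~\cite{DBLP:journals/ai/Dung95}. The short proof also explains the remark made just before the proposition that checking skeptical support of a set can be replaced by checking membership in $Gr(\fw)$ of individual arguments.
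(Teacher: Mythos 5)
Your proof is correct, and for the \emph{if} direction it coincides with the paper's (both are immediate from the fact, noted in the preliminaries, that $Gr(\fw)$ is contained in every complete extension, i.e.\ that the skeptically accepted arguments are exactly $Gr(\fw)$). For the \emph{only-if} direction, however, you take a genuinely different and considerably more elementary route. You simply observe that $Gr(\fw)$ is itself a member of $Co(\fw)$ and instantiate the universal quantifier in the definition of skeptical support at $E = Gr(\fw)$, which directly produces a witness $x \in X \cap Gr(\fw)$. The paper instead performs an auxiliary construction: it builds $\fwb = (\args \cup \{a,b\}, \attrel \cup \{(x,a) \mid x \in X\} \cup \{(a,b)\})$ with fresh arguments $a,b$, so that skeptical support of the set $X$ is reduced to skeptical acceptance of the single argument $b$; it then argues that $b$ lies in every complete extension of $\fwb$, hence $b \in Gr(\fwb)$, that some attacker $x \in X$ of $a$ must then lie in $Gr(\fwb)$ (to defend $b$), and finally transfers back via the conservativity claim $Gr(\fw) = Gr(\fwb) \cap \args$. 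Your argument buys brevity and avoids the auxiliary claims the paper leaves unproved (that $b$ is in every complete extension of $\fwb$, and that $Gr(\fw) = Gr(\fwb) \cap \args$); it relies only on the standard facts that the grounded extension is a complete extension and the least one, both available from Definition~\ref{defn:semantics} and the surrounding remarks. The paper's reduction is of independent interest --- it shows how a disjunctive observation can be compiled into a single fresh argument, a trick reusable for single-argument acceptance machinery --- but it is not needed to establish this proposition.
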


\begin{proof}[Proof of proposition~\ref{prp:individualobservation}]
The if direction is immediate.
For the only if direction, assume $\fw = (\args, \attrel)$ explains skeptical support for $X$.
Then for every complete extension $E$ of $\fw$, there is an $x \in X$ s.t. $x \in E$.
Define $\fwb$ by $\fwb = (\args \cup \{a, b\}, \attrel \cup \{ (x, a) \mid x \in X \} \cup \{ (a, b) \})$, where $a, b \not \in \args$.
Then for every complete extension $E$ of $\fwb$ it holds that $b \in E$, hence $b \in Gr(\fwb)$.
Thus $x \in Gr(\fwb)$ for some $x \in X$.
But $Gr(\fw) = Gr(\fwb) \cap \args$, hence $x \in Gr(\fw)$ for some $x \in X$.
\end{proof}

It may be that an AF $\fw$ does not skeptically or credulously support an observation $X$.
Abduction then amounts to finding a change to $\fw$ so that $X$ is supported.
We use the following definition of an \emph{AAF} (\emph{Abductive AF}) to capture the changes w.r.t. $\fw$ (each change represented by an AF $\fwb$ called an \emph{abducible} AF) that an agent considers.
We assume that $\fw$ itself is also an abducible AF, namely one that captures the case where no change is necessary.
Other abducible AFs may be formed by addition of arguments and attacks to $\fw$, removal of arguments and attacks from $\fw$, or a combination of both.

\begin{definition}\label{defn:preferentialaf}
An \emph{abductive AF} is a pair \mbox{$\paf = (\fw, \aafs)$} where $\fw$ is an AF and $\aafs \subseteq \allf$ a set of AFs called \emph{abducible} such that $\fw \in \aafs$.
\end{definition}

Given an AAF $(\fw, \aafs)$ and observation $X$, skeptical/credulous support for $X$ can be explained by the change from $\fw$ to some $\fwb \in \aafs$ that skeptically/credulously supports $X$.
In this case we say that $\fwb$ \emph{explains} skeptical/credulous support for $X$.
The arguments/attacks added to and absent from $\fwb$ can be seen as the actual explanation.

\begin{definition}
Let $\paf = (\fw, \aafs)$ be an AAF.
An abducible AF $\fwb \in \aafs$ \emph{explains} skeptical (resp. credulous) support for an observation $X$ iff $\fwb$ skeptically (resp. credulously) supports $X$.
\end{definition}

One can focus on explanations satisfying additional criteria, such as minimality w.r.t. the added or removed arguments/attacks.
We leave the formal treatment of such criteria for future work.

\begin{example}\label{examp:abd}
Let $\paf = (\fw, \{\fw, \fwb_1, \fwb_2, \fwb_3\})$, where $\fw, \fwb_1, \fwb_2$ and $\fwb_3$ are as defined in figure~\ref{fig:mfws}.
Let $X = \{b\}$ be an observation.
It holds that $\fwb_1$ and $\fwb_3$ both explain skeptical support for $X$, while $\fwb_2$ only explains credulous support for $X$.
\begin{figure}
\centering
\begin{tikzpicture}
 [scale=.7]
  \node (l1) at 	(0,-2) {$\fw$};
  \node (rb) at 	(0,0) [circle, draw, minimum size=11pt, inner sep=0pt] {b};
  \node (rc) at 	(1,0) [circle, draw, minimum size=11pt, inner sep=0pt] {c};
  \node (ra) at 	(-1,0) [circle, draw, minimum size=11pt, inner sep=0pt] {a};
  \node (rd) at 	(-1,1) [circle, draw, minimum size=11pt, inner sep=0pt] {d};
  \foreach \from/\to in {ra/rb,rb/rc,rc/rb}
    \draw [->] (\from) -- (\to);

  \node (l2) at 	(3,-2) {$\fwb_1$};
  \node (sb) at 	(3,0) [circle, draw, minimum size=11pt, inner sep=0pt] {b};
  \node (sc) at 	(4,0) [circle, draw, minimum size=11pt, inner sep=0pt] {c};
  \node (sa) at 	(2,0) [circle, draw, minimum size=11pt, inner sep=0pt] {a};
  \node (sd) at 	(2,1) [circle, draw, minimum size=11pt, inner sep=0pt] {d};
  \node (se) at 	(3,-1) [circle, draw, minimum size=11pt, inner sep=0pt] {e};
  \foreach \from/\to in {sa/sb,sb/sc,sc/sb,se/sa,se/sc}
    \draw [->] (\from) -- (\to);

  \node (l3) at 	(6,-2) {$\fwb_2$};
  \node (tb) at 	(6,0) [circle, draw, minimum size=11pt, inner sep=0pt] {b};
  \node (tc) at 	(7,0) [circle, draw, minimum size=11pt, inner sep=0pt] {c};
  \foreach \from/\to in {tb/tc,tc/tb}
    \draw [->] (\from) -- (\to);

  \node (l4) at 	(9,-2) {$\fwb_3$};
  \node (ub) at 	(9,0) [circle, draw, minimum size=11pt, inner sep=0pt] {b};
  \node (uc) at 	(10,0) [circle, draw, minimum size=11pt, inner sep=0pt] {c};
  \node (ue) at 	(9,-1) [circle, draw, minimum size=11pt, inner sep=0pt] {e};
  \foreach \from/\to in {ub/uc,uc/ub,ue/uc}
    \draw [->] (\from) -- (\to);
\end{tikzpicture}
\caption{The AFs of the AAF $(\fw, \{\fw, \fwb_1, \fwb_2, \fwb_3\})$.}
\label{fig:mfws}
\end{figure}
\end{example}

\vspace{-30pt}\begin{remark}\label{remark1}
The main difference between Sakama's~\cite{sakama} model of abduction in abstract argumentation and the one presented here, is that he takes an explanation to be a set of independently selectable abducible arguments, while we take it to be a change to the AF that is applied as a whole.
In section~\ref{sec:abductioninlp} we show that this is necessary when applying the abstract model in an instantiated setting.
\end{remark}

\section{Explanation dialogues}\label{sec:explanationdialogues}

In this section we present methods to determine, \emph{given an AAF $\paf = (\fw, \aafs)$ (for  $\fw = (\args, \attrel)$) whether an abducible AF $\fwb \in \aafs$ explains credulous or skeptical support for an observation $X \subseteq \args$}.
We build on ideas behind the \emph{grounded} and \emph{preferred games}, which are dialogical procedures that determine skeptical or credulous acceptance of an argument~\cite{Modgil2009}.
To sketch the idea behind these games (for a detailed discussion cf.~\cite{Modgil2009}): two imaginary players (PRO and OPP) take alternating turns in putting forward arguments according to a set of rules,
	PRO either as an initial claim or in defence against OPP's attacks, while OPP initiates different disputes by attacking the arguments put forward by PRO.
Skeptical or credulous acceptance is proven if PRO can win the game by ending every dispute in its favour according to a ``last-word'' principle.

Our method adapts this idea so that the moves made by PRO are essentially \emph{hypothetical} moves.
That is, to defend the initial claim (i.e., to explain an observation) PRO can put forward, by way of hypothesis, any attack $x \attrel y$ present in some $\fwb \in \aafs$.
This marks a choice of PRO to focus only on those abducible AFs in which the attack $x \attrel y$ is present.
Similarly, PRO can reply to an attack $x \attrel y$, put forward by OPP, with the claim that this attack is invalid, marking the choice of PRO to focus only on the abducible AFs in which the attack $x \attrel y$ is \emph{not} present.
Thus, each move by PRO narrows down the set of abducible AFs in which all of PRO's moves are valid.
The objective is to end the dialogue with a non-empty set of abducible AFs.
Such a dialogue represents a proof that these abducible AFs explain skeptical or credulous support for the observation.

Alternatively, such dialogues can be seen as games that determine skeptical/credulous support of an observation by an AF that are played simultaneously over all abducible AFs in the AAF.
In this view, the objective is to end the dialogue in such a way that it represents a proof for at least one abducible AF.
Indeed, in the case where $\paf = (\fw, \{\fw\})$, our method reduces simply to a proof theory for skeptical or credulous support of an observation by $\fw$.

Before we move on we need to introduce some notation.

\begin{definition}
Given a set $\aafs$ of AFs we define:
\vspace{-8pt}
\begin{itemize}
\item $\args_{\aafs} = \cup \{\args \mid (\args, \attrel) \in \aafs \}$,
\item $\attrel_{\aafs} = \cup \{ \attrel \mid (\args, \attrel) \in \aafs \}$,
\item $\aafs_{x \attrel y} = \{ (\args, \attrel) \in \aafs \mid x, y \in \args, x \attrel y\}$,
\item $\aafs_{X} = \{ (\args, \attrel) \in \aafs \mid X \subseteq \args\}$.
\end{itemize}
\end{definition}

We model dialogues as sequences of \emph{moves}, each move being of a certain type, and made either by PRO or OPP.

\begin{definition}
Let $\paf = (\fw, \aafs)$ be an AAF.
A \emph{dialogue based on $\paf$} is a sequence $S = (m_1, \ldots, m_n)$, where each $m_i$ is either:
\vspace{-8pt}
\begin{itemize}
\item an \emph{OPP attack} ``$\omoveattack{x}{y}$'', where $x \attrel_\aafs y$,
\item a \emph{hypothetical PRO defence} ``$\hpmoveattack{y}{x}$'', where $y \attrel_\aafs x$,
\item a \emph{hypothetical PRO negation} ``$\hpmoveattackneg{y}{x}$'', where $y \attrel_\aafs x$,
\item a \emph{conceding move} ``$\omoveok$'',
\item a \emph{success claim move} ``$\pmovewin$''.
\end{itemize}
We denote by $S \cdot S'$ the concatenation of $S$ and $S'$.
\end{definition}

Intuitively, a move $\omoveattack{y}{x}$ represents an attack by OPP on the argument $x$ by putting forward the attacker $y$.
A hypothetical PRO defence $\hpmoveattack{y}{x}$ represents a defence by PRO who puts forward $y$ to attack the argument $x$ put forward by OPP.
A hypothetical PRO negation $\hpmoveattackneg{y}{x}$, on the other hand, represents a claim by PRO that the attack $y \attrel x$ is \emph{not} a valid attack.
The conceding move $\omoveok$ is made whenever OPP runs out of possibilities to attack a given argument, while the move $\pmovewin$ is made when PRO is able to claim success.

In the following sections we specify how dialogues are structured.
Before doing so, we introduce some notation that we use to keep track of the abducible AFs on which PRO chooses to focus in a dialogue $D$.
We call this set the \emph{information state} of $D$ after a given move.
While it initially contains all abducible AFs in $\paf$, it is restricted when PRO makes a move $\hpmoveattack{x}{y}$ or $\hpmoveattackneg{x}{y}$.

\begin{definition} 
Let $\paf = (\fw, \aafs)$ be an AAF.
Let $D = (m_1, \ldots, m_n)$ be a dialogue based on $\paf$.
We denote the \emph{information state in $D$ after move $i$} by $J(D, i)$, which is defined recursively by:

$J(D, i) =
\begin{cases} 
\aafs											&	\mbox{if } i = 0, \\
J(D, i-1) \cap \aafs_{x \attrel y}						&	\mbox{if } m_{i} = \hpmoveattack{x}{y}, \\
J(D, i-1) \setminus \aafs_{x \attrel y}					&	\mbox{if } m_{i} = \hpmoveattackneg{x}{y}, \\
J(D, i-1)											&	\mbox{otherwise.}
\end{cases}
$

We denote by $J(D)$ the information state $J(D, n)$.

\end{definition}

\subsection{Skeptical explanation dialogues}

We define the rules of a dialogue using a set of production rules that recursively define the set of sequences constituting dialogues.
(The same methodology was used by Booth et al.~\cite{DBLP:conf/aldt/BoothKR13} in defining a dialogical proof theory related to preference-based argumentation.)
In a skeptical explanation dialogue for an observation $X$, an initial argument $x \in X$ is challenged by the opponent, who puts forward all possible attacks $\omoveattack{y}{x}$ present in any of the abducible AFs present in the AAF, followed by $\omoveok$.
We call this a \emph{skeptical OPP reply} to $x$.
For each move $\omoveattack{y}{x}$, PRO responds with a \emph{skeptical PRO reply to $y \attrel x$}, which is either a hypothetical defence $\hpmoveattack{z}{y}$ (in turn followed by a skeptical OPP reply to $z$) or a hypothetical negation $\hpmoveattackneg{y}{x}$.
Formally:

\begin{definition}[Skeptical explanation dialogue]
Let $\fw = (\args, \attrel)$, $\paf = (\fw, \aafs)$ and $x \in \args$.
\vspace{-6pt}
\begin{itemize}
\item A \emph{skeptical OPP reply to $x$} is a finite sequence 
	$(\omoveattack{y_1}{x}) \cdot S_1 \cdot \ldots \cdot (\omoveattack{y_n}{x}) \cdot S_n \cdot (\omoveok)$ 
		where $\{y_1, \ldots, y_n\} = \{ y \mid y \attrel_\aafs x\}$ 
		and each $S_i$ is a skeptical PRO reply to $y_i \attrel x$.
\item A \emph{skeptical PRO reply to $y \attrel x$} is either: 
(1) A sequence $(\hpmoveattack{z}{y}) \cdot S$ where $z \attrel_\aafs y$ and where $S$ is a skeptical OPP reply to $z$, or
(2) The sequence $(\hpmoveattackneg{y}{x}).$
\end{itemize}
Given an observation $X \subseteq \args$ we say that $\paf$ generates the \emph{skeptical explanation dialogue} $D$ for $X$ iff $D = S \cdot (\pmovewin)$, where $S$ is a skeptical OPP reply to some $x \in X$.
\end{definition}

The following theorem establishes soundness and completeness.

\begin{thm}\label{thm:scskeptical}
Let $\paf = (\fw, \aafs)$ be an AAF where $\fw = (\args, \attrel)$.
Let $X \subseteq \args$ and $\fwb \in \aafs$.
It holds that 
	$\fwb$ explains skeptical support for $X$
	iff
	$\paf$ generates a skeptical explanation dialogue $D$ for $X$ such that $\fwb \in J(D)$.
\end{thm}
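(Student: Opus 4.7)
The plan is to reduce the theorem to a dialectical characterization of the grounded extension and then relate that characterization to the dialogue structure via the information state $J(D)$. By Proposition~\ref{prp:individualobservation}, $\fwb$ explains skeptical support for $X$ iff $x \in Gr(\fwb)$ for some $x \in X$, so the goal becomes showing: a skeptical explanation dialogue $D$ for $X$ with $\fwb \in J(D)$ exists iff some $x \in X$ lies in $Gr(\fwb)$. I would use the standard inductive characterisation of the grounded extension: let $F^0_\fwb = \emptyset$ and $F^{i+1}_\fwb = \textit{Def}_\fwb(F^i_\fwb)$; then $Gr(\fwb) = \bigcup_{i\ge 0} F^i_\fwb$, and each $x \in Gr(\fwb)$ has a least rank $i(x)$ with $x \in F^{i(x)}_\fwb$.

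For the completeness direction ($\Rightarrow$), assume $x \in Gr(\fwb)$ for some $x \in X$, and construct a skeptical OPP reply $S_x$ to $x$ by recursion on $i(x)$. OPP enumerates every $y$ with $y \attrel_\aafs x$. For each such $y$: if $y \attrel x$ is not an attack in $\fwb$, PRO plays $\hpmoveattackneg{y}{x}$, which keeps $\fwb$ in the information state since $\fwb \notin \aafs_{y\attrel x}$; if $y \attrel x$ is in $\fwb$, then by the definition of $F^{i(x)}_\fwb$ some $z \in F^{i(x)-1}_\fwb$ satisfies $z \attrel y$ in $\fwb$, so $z \in Gr(\fwb)$ with $i(z) < i(x)$, and PRO plays $\hpmoveattack{z}{y}$ followed by the recursively constructed $S_z$; this preserves $\fwb \in J$ because $\fwb \in \aafs_{z \attrel y}$. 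Closing with $\pmovewin$ gives the desired $D$.

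For the soundness direction ($\Leftarrow$), assume $D = S_x \cdot (\pmovewin)$ for some skeptical OPP reply $S_x$ to some $x \in X$, with $\fwb \in J(D)$. Since $J$ is only restricted by PRO moves and never expands, $\fwb \in J(D,i)$ for every prefix. I would prove by structural induction on nested skeptical OPP replies the claim: if $S_z$ is a skeptical OPP reply to $z$ occurring inside $D$ and $\fwb \in J(D)$, then $z \in Gr(\fwb)$. OPP enumerates every $y \attrel_\aafs z$, which in particular covers every attacker of $z$ in $\fwb$. For such a $y$, PRO cannot have played $\hpmoveattackneg{y}{z}$, since that would place $\fwb$ outside $\aafs_{y \attrel z}$; hence PRO played $\hpmoveattack{w}{y}$ with $\fwb \in \aafs_{w \attrel y}$, so $w \attrel y$ holds in $\fwb$, and by the inductive hypothesis applied to the nested OPP reply to $w$ we get $w \in Gr(\fwb)$. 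Thus every $\fwb$-attacker of $z$ is counter-attacked by an element of $Gr(\fwb)$, so $z \in \textit{Def}_\fwb(Gr(\fwb)) = Gr(\fwb)$, since $Gr(\fwb)$ is complete.

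The main obstacle I anticipate is bookkeeping rather than conceptual: making the inductive correspondence between nested OPP replies inside $D$ and the rank filtration $(F^i_\fwb)_{i\ge 0}$ precise, and verifying that the invariant $\fwb \in J(D,i)$ can be applied locally at each PRO move to force the right choice between a defence and a negation. A secondary subtlety is termination of the construction in the completeness direction: this follows because the rank strictly decreases at each recursive call and OPP's enumeration $\{y \mid y \attrel_\aafs x\}$ is finite under the standing assumption that each AF has finite attack set and $\aafs$ contributes only finitely many attackers of any fixed $x$.
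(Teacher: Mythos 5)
Your proposal is correct and follows essentially the same route as the paper's own argument: the reduction via Proposition~\ref{prp:individualobservation}, an induction on the rank of $x$ in the iteration of the characteristic function (the paper calls this the \emph{degree} of $x$) to build the skeptical OPP reply with negations for non-$\fwb$ attacks and lower-rank grounded defenders for $\fwb$-attacks, and a structural induction on nested OPP replies using the monotone decrease of $J$ for the converse. The only differences are cosmetic (your filtration $F^i_\fwb$ versus the paper's degree lemma), and your finiteness caveat matches the paper's implicit assumption that $\args_{\aafs}$ is finite.
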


Due to space constraints we only provide a sketch of the proof. 

\begin{proof}[Sketch of proof]
Let $\paf = ((\args, \attrel), \aafs)$, $X \subseteq \args$ and $\fwb \in \aafs$.
\textit{(Only if:)} 
Assume $x \in Gr(\fwb)$ for some $x \in X$.
By induction on the number of times the characteristic function~\cite{DBLP:journals/ai/Dung95} is applied so as to establish that $x \in Gr(\fwb)$, it can be shown that a credulous OPP reply $D$ to $x$ exists (and hence a dialogue $D \cdot (\pmovewin)$ for $X$) s.t. $\fwb \in J(D \cdot (\pmovewin))$. 
\textit{(If:)} 
Assume $\paf$ generates a skeptical explanation dialogue $D$ for $X$ s.t. $\fwb \in J(D)$.
By induction on the structure of $D$ it can be shown that $x \in Gr(\fwb)$ for some $x \in X$.
\end{proof}

\begin{example}\label{examp:skeptical}
The listing below shows a skeptical explanation dialogue $D = (m_1, \ldots, m_8)$ for the observation $\{b\}$ that is generated by the AAF defined in example~\ref{examp:abd}.

\begin{tabular}{l | l | l}
$i$			&	$m_i$									&	$J(D, i)$				\\
\hline
1		&	\hspace{0pt}	$\omoveattack{c}{b}$			&	$\{F, G_1, G_2, G_3\}$	\\
2		&	\hspace{10pt}		$\hpmoveattack{e}{c}$		&	$\{G_1, G_3\}$	\\
3		&	\hspace{10pt}		$\omoveok$				&	$\{G_1, G_3\}$	\\
4		&	\hspace{0pt}	$\omoveattack{a}{b}$			&	$\{G_1, G_3\}$	\\
5		&	\hspace{10pt}		$\hpmoveattack{e}{a}$		&	$\{G_1\}$	\\
6		&	\hspace{10pt}		$\omoveok$				&	$\{G_1\}$	\\
7		&	\hspace{0pt}	$\omoveok$					&	$\{G_1\}$	\\
8		&	\hspace{0pt}	$\pmovewin$					&	$\{G_1\}$	\\
\end{tabular}

The sequence $(m_1, \ldots, m_7)$ is a skeptical OPP reply to $b$, in which OPP puts forward the two attacks $c \attrel b$ and $a \attrel b$.
PRO defends $b$ from both $c$ and $a$ by putting forward the attacker $e$ (move 2 and 5).
This leads to the focus first on the abducible AFs $G_1, G_3$ (in which the attack $e \attrel c$ exists) and then on $G_1$ (in which the attack $e \attrel a$ exists).
This proves that $G_1$ explains skeptical support for the observation $\{b\}$.
Another dialogue is shown below.

\begin{tabular}{l | l | l}
$i$			&	$m_i$									&	$J(D, i)$				\\
\hline
1		&	\hspace{0pt}	$\omoveattack{c}{b}$			&	$\{F, G_1, G_2, G_3\}$	\\
2		&	\hspace{10pt}		$\hpmoveattack{e}{c}$		&	$\{G_1, G_3\}$	\\
3		&	\hspace{10pt}		$\omoveok$				&	$\{G_1, G_3\}$	\\
4		&	\hspace{0pt}	$\omoveattack{a}{b}$			&	$\{G_1, G_3\}$	\\
5		&	\hspace{10pt}		$\hpmoveattackneg{a}{b}$	&	$\{G_3\}$	\\
6		&	\hspace{0pt}	$\omoveok$					&	$\{G_3\}$	\\
7		&	\hspace{0pt}	$\pmovewin$					&	$\{G_3\}$	\\
\end{tabular}

Here, PRO defends $b$ from $c$ by using the argument $e$, but defends $b$ from $a$ by claiming that the attack $a \attrel b$ is invalid.
This leads to the focus first on the abducible AFs $G_1, G_3$ (in which the attack $e \attrel c$ exists) and then on $G_3$ (in which the attack $a \attrel b$ does not exist).
This dialogue proves that $G_3$ explains skeptical support for $\{b\}$.
\end{example}

\subsection{Credulous explanation dialogues}

The definition of a credulous explanation dialogue is similar to that of a skeptical one.
The difference lies in what constitutes an acceptable defence.
To show that an argument $x$ is skeptically accepted, $x$ must be defended from its attackers by arguments other than $x$ itself.
For credulous acceptance, however, it suffices to show that $x$ is a member of an admissible set, and hence $x$ may be defended from its attackers by any argument, including $x$ itself.
To achieve this we need to keep track of the arguments that are, according to the moves made by PRO, accepted.
Once an argument $x$ is accepted, PRO does not need to defend $x$ again, if this argument is put forward a second time.

Formally a \emph{credulous OPP reply} to $(x, Z)$ (for some $x \in \args_I$ and set $Z \subseteq \args_I$ used to keep track of accepted arguments) consists of all possible attacks $\omoveattack{y}{x}$ on $x$, followed by $\omoveok$ when all attacks have been put forward.
For each move $\omoveattack{y}{x}$, PRO responds either by putting forward a hypothetical defence $\hpmoveattack{z}{y}$ which (this time \emph{only if} $z \not \in Z$) is followed by a credulous OPP reply to $(z, Z \cup \{z\})$, or by putting forward a hypothetical negation $\hpmoveattackneg{y}{x}$.
We call this response a \emph{credulous PRO reply to $(y \attrel x, Z)$}.
A credulous explanation dialogue for a set $X$ consists of a credulous OPP reply to $(x, \{x\})$ for some $x \in X$, followed by a success claim $\pmovewin$.

In addition, arguments put forward by PRO in defence of the observation may not conflict.
Such a conflict occurs when OPP puts forward $\omoveattack{x}{y}$ and $\omoveattack{y}{z}$ (indicating that both $y$ and $z$ are accepted) while PRO does not put forward $\hpmoveattackneg{y}{z}$.
If this situation does not occur we say that the dialogue is \emph{conflict-free}.

\begin{definition}[Credulous explanation dialogue]
Let $\fw = (\args, \attrel)$, $\paf = (\fw, \aafs)$, $x \in \args$ and $Z \subseteq \args$.
\vspace{-8pt}
\begin{itemize}
\item A \emph{credulous OPP reply to $(x, Z)$} is a finite sequence 
	$(\omoveattack{y_1}{x}) \cdot S_1 \cdot \ldots \cdot (\omoveattack{y_n}{x}) \cdot S_n \cdot (\omoveok)$
		where $\{y_1, \ldots, y_n\} = \{ y \mid y \attrel_\aafs x\}$ 
		and each $S_i$ is a credulous PRO reply to $(y_i \attrel x, Z)$.
\item A \emph{credulous PRO reply to $(y \attrel x, Z)$} is either:
	(1) a sequence $(\hpmoveattack{z}{y}) \cdot S$ such that $z \attrel_\aafs y$, $z \not \in Z$ and $S$ is a credulous OPP reply to $(z, Z \cup \{z\})$,
	(2) a sequence $(\hpmoveattack{z}{y})$ such that $z \attrel_\aafs y$ and $z \in Z$, or
	(3) the sequence $(\hpmoveattackneg{y}{x}).$
\end{itemize}
Given a set $X \subseteq \args$ we say that $\paf$ generates the \emph{credulous explanation dialogue} $D$ for $X$ iff $D = S \cdot (\pmovewin)$, where $S$ is a credulous OPP reply to $(x, \{x\})$ for some $x \in X$.
We say that $D$ is \emph{conflict-free} iff for all $x, y, z \in \args_{I}$ it holds that if $D$ contains the moves $\omoveattack{x}{y}$ and $\omoveattack{y}{z}$ then it contains the move $\hpmoveattackneg{y}{z}$.
\end{definition}

The following theorem establishes soundness and completeness.

\begin{thm}\label{thm:sccredulous}
Let $\paf = (\fw, \aafs)$ be an AAF where $\fw = (\args, \attrel)$.
Let $X \subseteq \args$ and $\fwb \in \aafs$.
It holds that 
	$\fwb$ explains credulous support for $X$
	iff
	$\paf$ generates a conflict-free credulous explanation dialogue $D$ for $X$ such that $\fwb \in J(D)$.
\end{thm}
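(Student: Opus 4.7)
The plan is to mirror the proof of Theorem~\ref{thm:scskeptical}, replacing the grounded-extension argument with an admissibility one and threading the ``already-committed'' set $Z$ through the construction to ensure termination. I would use throughout the equivalence noted in the preliminaries that $\fwb$ credulously supports $X$ iff some admissible set $E$ of $\fwb$ satisfies $E \cap X \neq \emptyset$, so the task reduces to relating conflict-free credulous dialogues to admissible sets.

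For the \emph{only if} direction, assume such an $E$ and pick $x \in E \cap X$. I would construct recursively a credulous OPP reply $D$ to $(x, \{x\})$ keeping $\fwb \in J(D)$. At any recursive node with committed set $Z \subseteq E$ where OPP plays $\omoveattack{y}{x'}$, PRO responds as follows: if $y \not\attrel_{\fwb} x'$, play $\hpmoveattackneg{y}{x'}$, which only removes AFs containing $y \attrel x'$ and hence preserves $\fwb \in J$; otherwise, admissibility of $E$ supplies some $z \in E$ with $z \attrel_{\fwb} y$, and PRO plays $\hpmoveattack{z}{y}$, using clause~(2) of the credulous PRO reply (no recursion) when $z \in Z$ and clause~(1) (recursing on $(z, Z \cup \{z\})$) otherwise. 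Since $Z$ grows monotonically inside the finite set $E$, the recursion terminates. Conflict-freeness of the resulting dialogue is inherited from conflict-freeness of $E$: whenever $\omoveattack{u}{v}$ and $\omoveattack{v}{w}$ both appear, we have $v, w \in E$; if also $v \attrel_{\fwb} w$ this would contradict conflict-freeness of $E$, so $v \not\attrel_{\fwb} w$ and PRO's rule forces the move $\hpmoveattackneg{v}{w}$.

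For the \emph{if} direction, given a conflict-free dialogue $D \cdot (\pmovewin)$ for $X$ with $\fwb \in J(D \cdot (\pmovewin))$ and initial $x$, I would let $E$ be $\{x\}$ together with every argument appearing as the defender slot $z$ in some move $\hpmoveattack{z}{y}$ of $D$, and show that $E$ is admissible in $\fwb$ and meets $X$. Because $\fwb \in J(D)$, every $\hpmoveattack{z}{y}$ witnesses $z \attrel_{\fwb} y$ and every $\hpmoveattackneg{y}{x'}$ witnesses $y \not\attrel_{\fwb} x'$. Crucially, every $u \in E$ is subject to at least one OPP reply in $D$ (the initial one when $u = x$, and otherwise the one following the first clause-(1) move $\hpmoveattack{u}{\cdot}$ that introduced $u$ into a committed set), which enumerates every $\attrel_{\aafs}$-attacker of $u$ and a fortiori every $\attrel_{\fwb}$-attacker. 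So each such attacker is met either by a hypothetical negation (impossible here because the attacker is genuine in $\fwb$) or by some $\hpmoveattack{z}{\cdot}$ with $z \in E$, giving $E \subseteq \textit{Def}_{\fwb}(E)$. Conflict-freeness of $E$ in $\fwb$ follows from conflict-freeness of $D$: if $u, v \in E$ with $u \attrel_{\fwb} v$, the just-proved defence property supplies $z \in E$ with $z \attrel_{\fwb} u$, so both $\omoveattack{z}{u}$ and $\omoveattack{u}{v}$ occur in $D$, forcing $\hpmoveattackneg{u}{v}$ and hence removing $\fwb$ from $J$, contradicting $\fwb \in J(D)$.

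The main obstacle is the bookkeeping around $Z$ in the only-if direction: clause~(2) is needed for termination (admissible sets can contain defence cycles, so without $Z$ the construction would loop), but it must not silently omit a $\fwb$-attacker. The key observation is that whenever the shortcut is used on some $z \in Z$, the element $z$ was previously introduced into $Z$ via clause~(1), and the subtree rooted at that earlier introduction already contains an OPP reply enumerating and successfully handling every $\attrel_{\aafs}$-attacker of $z$, so the global invariant ``all $\fwb$-attackers of $E$ are covered'' is preserved. A secondary, purely notational, check is that the dialogue-level conflict-freeness condition --- formulated in terms of OPP moves occurring anywhere in $D$ --- exactly matches the semantic conflict-freeness of the extracted $E$ rather than something strictly weaker, which is what the defence property secures via the supplied $z$.
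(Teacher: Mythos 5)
Your proposal is correct and follows essentially the same route as the paper's full proof: the same recursive construction of the credulous OPP reply using the committed set $Z$ (with the clause-(2) shortcut for already-committed defenders and termination by finiteness), and in the converse direction the same extracted set $E = \{x\} \cup \{z \mid \hpmoveattack{z}{y} \in D\}$, with $\fwb \in J(D)$ forcing every PRO move to be genuine in $\fwb$ and dialogue conflict-freeness transferring to conflict-freeness of $E$. Your explicit justification that every member of $E$ is the subject of some OPP reply (because clause-(2) uses only arguments previously introduced via clause (1) or the initial argument) is exactly the point the paper's argument relies on as well.
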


\begin{proof}[Sketch of proof.]
Let $\paf = ((\args, \attrel), \aafs)$, $X \subseteq \args$ and $\fwb \in \aafs$.
\textit{(Only if:)} 
Assume for some $x \in X$ and $E \in Co(\fwb)$ that $x \in E$.
Using the fact that $E \subseteq Def_{\fwb}(E)$ one can recursively define a credulous OPP reply $D$ to $(x, Z)$ for some $Z \subseteq \args$ and hence a credulous explanation dialogue $D \cdot (\pmovewin)$.
Conflict-freeness of $E$ implies conflict-freeness of $D$.
\textit{(If:)} 
Assume $\paf$ generates a credulous explanation dialogue $D \cdot (\pmovewin)$ for $X$ such that $\fwb \in J(D)$.
Then $D$ is a credulous OPP reply to $(a, \{a\})$ for some $a \in X$.
It can be shown that the set $E = \{a\} \cup \{ x \mid \hpmoveattack{x}{z} \in D\}$ satisfies $E \subseteq Def_{\fwb}(E)$.
Conflict-freeness of $D$ implies conflict-freeness of $E$.
Hence $a \in E$ for some $E \in Co(\fwb)$.
\end{proof}

\begin{example}
The listing below shows a conflict-free credulous explanation dialogue $D = (m_1, \ldots, m_6)$ for the observation $\{b\}$  generated by the AAF defined in example~\ref{examp:abd}.

\begin{tabular}{l | l | l}
$i$		&	$m_i$									&	$J(D, i)$					\\
\hline
1		&	\hspace{0pt}	$\omoveattack{c}{b}$			&	$\{\fw, \fwb_1, \fwb_2, \fwb_3\}$\\
2		&	\hspace{10pt}		$\hpmoveattack{b}{c}$		&	$\{\fw, \fwb_1, \fwb_2, \fwb_3\}$\\
3		&	\hspace{0pt}	$\omoveattack{a}{b}$			&	$\{\fw, \fwb_1, \fwb_2, \fwb_3\}$\\
4		&	\hspace{10pt}		$\hpmoveattackneg{a}{b}$	&	$\{\fwb_2, \fwb_3\}$\\
5		&	\hspace{0pt}	$\omoveok$					&	$\{\fwb_2, \fwb_3\}$\\
6		&	\hspace{0pt}	$\pmovewin$					&	$\{\fwb_2, \fwb_3\}$\\
\end{tabular}

Here, the sequence $(m_1, \ldots, m_5)$ is a credulous OPP reply to $(b, \{b\})$.
PRO defends $b$ from OPP's attack $c \attrel b$ by putting forward the attack $b \attrel c$.
Since $b$ was already assumed to be accepted, this suffices.
At move $m_4$, PRO defends itself from the attack $a \attrel b$ by negating it.
This restricts the focus on the abducible AFs $\fwb_2$ and $\fwb_3$.
The dialogue proves that these two abducible AFs explain credulous support for the observation $\{b\}$.
Finally, the skeptical explanation dialogues from example~\ref{examp:skeptical} are also credulous explanation dialogues.
\end{example}

\section{Abduction in logic programming}\label{sec:abductioninlp}

In this section we show that AAFs can be instantiated with abductive logic programs,
	in the same way that regular AFs can be instantiated with regular logic programs.
In sections~\ref{sec:lp} and \ref{sec:lparg} we recall the necessary basics of logic programming and the relevant results regarding logic programming as instantiated argumentation.
In section~\ref{sec:alp} we present a model of abductive logic programming based on Sakama and Inoue's model of extended abduction~\cite{DBLP:conf/ijcai/InoueS95,DBLP:journals/amai/InoueS99}, and in section~\ref{sec:lparg} we show how this model can be instantiated using AAFs.

\subsection{Logic programs and partial stable semantics}\label{sec:lp}

A logic program $P$ is a finite set of rules, each rule being of the form $C \leftarrow A_1, \ldots, A_n, \negf B_1, \ldots, \negf B_m$ where $C, A_1, \ldots, A_n, B_1, \dots, B_m$ are \emph{atoms}.
If $m = 0$ then the rule is called \emph{definite}.
If both $n = 0$ and $m = 0$ then the rule is called a \emph{fact} and we identify it with the atom $C$.
We assume that logic programs are ground.
Alternatively, $P$ can be regarded as the set of ground instances of a set of non-ground rules.
We denote by $At_P$ the set of all (ground) atoms occurring in $P$.
The logic programming semantics we focus on can be defined using \emph{3-valued interpretations}~\cite{DBLP:journals/fuin/Przymusinski90}:

\begin{definition}
A 3-valued interpretation $I$ of a logic program $P$ is a pair $I = (T, F)$ where $T, F \subseteq At_P$ and $T \cap F = \emptyset$.
An atom $A \in At_{P}$ is \emph{true} (resp. \emph{false}, \emph{undecided}) in $I$ iff $A \in T$ (resp. $A \in F$, $A \in At_P \setminus (T \cup F)$).
\end{definition}

The following definition of a \emph{partial stable model} is due to Przymusinski~\cite{DBLP:journals/fuin/Przymusinski90}.
Given a logic program $P$ and 3-valued interpretation $I$ of $P$, the \emph{GL-transformation} $\frac{P}{I}$ is a logic program obtained by
	replacing in every rule in $P$ every premise 
		$\negf B$ such that $B$ is true (resp. undecided, false) in $I$ by the atoms $0$ (resp. $\frac{1}{2}$, $1$), where $0$ (resp. $\frac{1}{2}$, $1$) are defined to be false (resp. undecided, true) in every interpretation.
It holds that for all 3-valued interpretations $I$ of $P$, $\frac{P}{I}$ is definite (i.e., consists only of definite rules).
This means that $\frac{P}{I}$ has a unique \emph{least} 3-valued interpretation $(T, F)$ with minimal $T$ and maximal $F$ that satisfies all rules.
That is, for all rules $C \leftarrow A_1, \ldots, A_n$, in $\frac{P}{I}$, $C$ is true (resp. \emph{not} false) in $(T, F)$ if for all $i \in \{1, \ldots, n\}$, $A_i$ is true (resp. \emph{not} false) in $(T, F)$.
Given a 3-valued interpretation $I$, the least 3-valued interpretation of $\frac{P}{I}$ is denoted by $\Gamma(I)$.
This leads to the following definition of a \emph{partial stable model} of a logic program, along with the associated notions of consequence.

\begin{definition}\label{defn:psm} \cite{DBLP:journals/fuin/Przymusinski90}
Let $P$ be a logic program.
A 3-valued interpretation $I$ is a \emph{partial stable model} of $P$ iff $I = \Gamma(I)$.
We say that an atom $C$ is a skeptical (resp. credulous) consequence of $P$ iff $C$ is true in all (resp. some) partial stable models of $P$. 
\end{definition}

It has been shown that the above defined notion of skeptical consequence coincides with the \emph{well-founded} semantics~\cite{DBLP:journals/fuin/Przymusinski90}.

\subsection{Logic programming as argumentation}\label{sec:lparg}

Wu et al.~\cite{DBLP:journals/sLogica/WuCG09} have shown that a logic program $P$ can be transformed into an AF $\fw$ in such a way that the consequences of $P$ under the partial stable semantics can be computed by looking at the complete extensions of $\fw$.
The idea is that an argument consists of a conclusion $C \in At_P$, a set of rules $R \subseteq P$ used to derive $C$ and a set $N \subseteq At_P$ of atoms that must be underivable in order for the argument to be acceptable. 
The argument is attacked by another argument with a conclusion $C'$ iff $C' \in N$.
The following definition, apart from notation, is due to Wu et al.~\cite{DBLP:journals/sLogica/WuCG09}.

\begin{definition}
Let $P$ be a logic program.
An instantiated argument is a triple $(C, R, N)$, where $C \in At_P$, $R \subseteq P$ and $N \subseteq At_P$.
We say that $P$ generates $(C, R, N)$ iff either:
\begin{itemize}
\item $r = C \leftarrow \negf B_1, \ldots, \negf B_m$ is a rule in $P$, $R = \{ r \}$ and $N = \{B_1, \ldots, B_m\}$.
\item (1) $r = C \leftarrow A_1, \ldots, A_n, \negf B_1, \ldots, \negf B_m$ is a rule in $P$, (2) $P$ generates, for each $i \in \{1, \ldots, n]$ an argument $(A_i, R_i, N_i)$ such that $r \not \in R_i$, and (3) $R = \{r\} \cup R_1 \cup \ldots \cup R_n$ and $N = \{B_1, \ldots, B_m\} \cup N_1 \cup \ldots \cup N_n$.
\end{itemize}
We denote the set of arguments generated by $P$ by $\args_{P}$.
Furthermore, the attack relation generated by $P$ is denoted by $\attrel_{P}$ and is defined by $(C, R, N) \attrel_{P} (C', R', N')$ iff $C \in N'$.
\end{definition}

The following theorem states that skeptical (resp. credulous) acceptance in $(\args_{P}, \attrel_{P})$ corresponds with skeptical (resp. credulous) consequences in $P$ as defined in definition~\ref{defn:psm}.
It follows from theorems 15 and 16 due to Wu et al.~\cite{DBLP:journals/sLogica/WuCG09}.

\begin{thm}\label{thm:wu}
Let $P$ be a logic program.
An atom $C \in At_{P}$ is a skeptical (resp. credulous) consequence of $P$ iff some $(C, R, N) \in \args_{P}$ is skeptically (resp. credulously) accepted in $(\args_{P}, \attrel_{P})$.
\end{thm}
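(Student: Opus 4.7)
The plan is to reduce the theorem to an explicit correspondence between partial stable models of $P$ and complete extensions of the instantiated AF $(\args_{P}, \attrel_{P})$, which is essentially what theorems 15 and 16 of Wu et al.\ supply. Concretely, I would define a map $\Phi$ sending a partial stable model $I = (T, F)$ to $\Phi(I) = \{(C, R, N) \in \args_{P} \mid C \in T \text{ and } N \subseteq F\}$ and establish (a) that $\Phi(I)$ is a complete extension of $(\args_{P}, \attrel_{P})$ whenever $I$ is a partial stable model, and (b) that every complete extension arises this way, with the grounded extension corresponding to the well-founded model. Wu et al.\ already carry out this bridge, so the bulk of the work is invocation plus translation between their vocabulary and ours.

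Given $\Phi$, the theorem then splits into two symmetric transfers. For the credulous case, $C$ is a credulous consequence iff $C \in T$ for some partial stable model $(T,F)$, iff via $\Phi$ some $(C, R, N)$ lies in some complete extension, which is the definition of credulous acceptance. For the skeptical case, $C$ is a skeptical consequence iff $C$ is true in the well-founded model, and under $\Phi$ this corresponds to the grounded extension containing some argument whose conclusion is $C$; since the set of skeptically accepted arguments coincides with the grounded extension, this is exactly what we need. One subtlety is that skeptical acceptance asks for a single argument present in every complete extension, not merely that each extension contain some argument with conclusion $C$; this is resolved by choosing the argument built from the minimal derivation of $C$ under the GL-transformation of the well-founded model, since that particular argument will survive in every $\Phi(I)$.

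The main obstacle is verifying that $\Phi(I)$ is a complete extension. Conflict-freeness is immediate from $T \cap F = \emptyset$. Admissibility and the fixed-point condition $E = \textit{Def}_{\fw}(E)$ require unpacking $I = \Gamma(I)$: if $(C,R,N) \in \Phi(I)$ is attacked by $(C',R',N')$ with $C' \in N \subseteq F$, then $C' \in F$ forces every rule deriving $C'$ in $P/I$ to have a false body, which inductively yields a defending argument in $\Phi(I)$; conversely, given a complete extension $E$, one must define $T$ as the conclusions of arguments in $E$ and $F$ as the atoms whose every derivation is blocked by $E$, and check that $\Gamma((T,F)) = (T,F)$. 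The inductive construction of arguments along the GL-transformation is the step that takes the most care, but everything else reduces to straightforward bookkeeping once the correspondence is in place.
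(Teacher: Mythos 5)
Your proposal is correct and takes essentially the same route as the paper: the paper's entire proof is the citation of Theorems 15 and 16 of Wu et al., i.e., exactly the correspondence between partial stable models and complete extensions that you outline via $\Phi$. Your extra details (the grounded/well-founded link and the choice of a single argument with $N$ contained in the false atoms of the well-founded model to handle the skeptical case) merely make explicit the transfer the paper leaves implicit.
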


\subsection{Abduction in logic programming}\label{sec:alp}

The model of abduction in logic programming that we use is based on the model of \emph{extended} abduction studied by Inoue and Sakama~\cite{DBLP:conf/ijcai/InoueS95,DBLP:journals/amai/InoueS99}.
They define an abductive logic program (ALP) to consist of a logic program and a set of atoms called \emph{abducibles}.

\begin{definition}
An abductive logic program is a pair $(P, U)$ where $P$ is a logic program and $U \subseteq At_P$ a set of facts called abducibles.
\end{definition}

Note that, as before, the set $U$ consists of ground facts of the form $C \leftarrow$ (identified with the atom $C$) and can alternatively be regarded as the set of ground instances of a set of non-ground facts.
A hypothesis, according to Inoue and Sakama's model, consists of both a positive element (i.e., abducibles added to $P$) and a negative element (i.e., abducibles removed from $P$).

\begin{definition}\label{defn:hypothesisalp}
Let $\alp = (P, U)$ be an abductive logic program.
A hypothesis is a pair $(\Delta^{+}, \Delta^{-})$ such that $\Delta^{+}, \Delta^{-} \subseteq U$ and $\Delta^{+} \cap \Delta^{-} = \emptyset$.
A hypothesis $(\Delta^{+}, \Delta^{-})$ skeptically (resp. credulously) explains a query $Q \in At_{P}$ if and only if $Q$ is a skeptical (resp. credulous) consequence of $(P \cup \Delta^{+}) \setminus \Delta^{-}$.
\end{definition}

Note that Sakama and Inoue focus on computation of explanations under the stable model semantics of $P$, and require $P$ to be acyclic to ensure that a stable model of $P$ exists and is unique~\cite{DBLP:journals/amai/InoueS99}.
We, however, define explanation in terms of the consequences according to the partial stable models of $P$, which always exist even if $P$ is not acyclic~\cite{DBLP:journals/fuin/Przymusinski90}, so that we do not need this requirement.

The following example demonstrates the previous two definitions.

\begin{example}\label{example:alp}
Let $\alp = (P, U)$ where $P = \{(p \leftarrow \negf s, r), (p \leftarrow \negf s, \negf q), (q \leftarrow \negf p), r\}$ and $U =  \{r, s\}$.
The hypothesis $(\{s\},\emptyset)$ skeptically explains $q$, witnessed by the unique model $I = (\{r, s, q\}, \{p\})$ satisfying $I = \Gamma(I)$.
Similarly, $(\{s\}, \{r\}))$ skeptically explains $q$ and $(\emptyset, \{r\}))$ credulously explains $q$.
\end{example}

\subsection{Instantiated abduction in argumentation}\label{sec:alparg}

In this section we show that an AAF $(\fw, \aafs)$ can be instantiated on the basis of an abductive logic program $(P, U)$.
The idea is that every possible hypothesis $(\Delta^{+}, \Delta^{-})$ maps to an abducible AF generated by the logic program $(P \cup \Delta^{+}) \setminus \Delta^{-}$.
The hypotheses for a query $Q$ then correspond to the abducible AFs that explain the observation $X$ consisting of all arguments with conclusion $Q$.
The construction of $(\fw, \aafs)$ on the basis of $(P, U)$ is defined as follows.

\begin{definition}\label{defn:instantiation}
Let $\alp = (P, U)$ be an abductive logic program.
Given a hypothesis $(\Delta^{+}, \Delta^{-})$, 
	we denote by $\fw_{(\Delta^{+}, \Delta^{-})}$ 
	the AF 
	$	(	\args_{(P \cup \Delta^{+}) \setminus \Delta^{-}}, 
			\attrel_{(P \cup \Delta^{+}) \setminus \Delta^{-}})$.
The AAF \emph{generated by} $\alp$ 
	is denoted by $\paf_{\alp}$ and defined by 
	$\paf_{\alp} = ((\args_{P}, \attrel_{P}), I_\alp)$, 
		where $I_\alp = \{ \fw_{(\Delta^{+}, \Delta^{-})} \mid \Delta^{+}, \Delta^{-} \subseteq U, \Delta^{+} \cap \Delta^{-} = \emptyset \}$.
\end{definition}

The following theorem states the correspondence between the explanations of a query $Q$ in an abductive logic program $\alp$ and the explanations of an observation $X$ in the AAF $\paf_{\alp}$.

\begin{thm}\label{thm:instantiation}
Let $\alp = (P, U)$ be an abductive logic program, $Q \in At_{P}$ a query and $(\Delta^{+}, \Delta^{-})$ a hypothesis.
Let $\paf_{\alp} = (\fw, \aafs)$.
We denote by $X_{Q}$ the set $\{(C, R, N) \in \args_{P} \mid C = Q\}$.
It holds that $(\Delta^{+}, \Delta^{-})$ skeptically (resp. credulously) explains $Q$ iff $\fw_{(\Delta^{+}, \Delta^{-})}$ skeptically (resp. credulously) explains $X_{Q}$.
\end{thm}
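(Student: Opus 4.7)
The plan is to reduce both sides of the equivalence to a common characterization by applying Theorem~\ref{thm:wu} to the modified program $P' := (P \cup \Delta^+) \setminus \Delta^-$. Unfolding Definition~\ref{defn:hypothesisalp}, $(\Delta^+, \Delta^-)$ skeptically (resp.\ credulously) explains $Q$ iff $Q$ is a skeptical (resp.\ credulous) consequence of $P'$. Applying Theorem~\ref{thm:wu} to $P'$, this is equivalent to the existence of some $(Q, R, N) \in \args_{P'}$ that is skeptically (resp.\ credulously) accepted in $(\args_{P'}, \attrel_{P'})$, and by Definition~\ref{defn:instantiation} this AF is exactly $\fw_{(\Delta^+, \Delta^-)}$.

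On the argumentation side, I would unfold the definition of support from Section~\ref{sec:preferentialafs}: $\fw_{(\Delta^+, \Delta^-)}$ skeptically (resp.\ credulously) explains $X_Q$ iff for every (resp.\ some) $E \in Co(\fw_{(\Delta^+, \Delta^-)})$ some $x \in X_Q$ lies in $E$. In the skeptical case I would additionally invoke Proposition~\ref{prp:individualobservation} to reduce this to the cleaner statement that some $x \in X_Q$ belongs to $Gr(\fw_{(\Delta^+, \Delta^-)})$. Both sides of the theorem are thus of the shape ``some argument with conclusion $Q$ is accepted in $\fw_{(\Delta^+, \Delta^-)}$'', differing only in which pool of candidate witnesses is allowed.

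The main obstacle is precisely this mismatch of pools: Theorem~\ref{thm:wu} supplies a witness from $\args_{P'}$, whereas $X_Q = \{(C, R, N) \in \args_{P} \mid C = Q\}$ is defined over $\args_{P}$, and these sets diverge whenever $\Delta^+ \neq \emptyset$. To close the gap I would prove an auxiliary lemma stating that some argument with conclusion $Q$ is skeptically (resp.\ credulously) accepted in $\fw_{(\Delta^+, \Delta^-)}$ iff some element of $X_Q$ is. The key structural fact to exploit is that elements of $\Delta^+$ are facts, so the only arguments newly generated in $\args_{P'} \setminus \args_{P}$ are the trivial ones $(s, \{s\}, \emptyset)$ for $s \in \Delta^+$ (unattacked in $\fw_{(\Delta^+, \Delta^-)}$, since their negative part is empty) together with derived arguments that reuse them as subarguments; accepted arguments for $Q$ in the modified AF therefore correspond canonically to arguments in $X_Q$. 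Chaining this lemma with the two earlier equivalences yields the theorem in both the skeptical and credulous variants simultaneously.
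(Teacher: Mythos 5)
Your first two paragraphs are essentially the paper's own proof, which is nothing more than the one-line remark that the claim follows from Theorem~\ref{thm:wu} applied to $P' = (P \cup \Delta^{+}) \setminus \Delta^{-}$ together with Definitions~\ref{defn:hypothesisalp} and~\ref{defn:instantiation} (your additional appeal to Proposition~\ref{prp:individualobservation} for the skeptical quantifier swap is correct and in the spirit of that proof). Where you go beyond the paper is the bridging lemma, and that is where the attempt breaks: the lemma is false as stated. Take $P = \{p \leftarrow q\}$, $U = \{q\}$, $\Delta^{+} = \{q\}$, $\Delta^{-} = \emptyset$ and $Q = q$ (note $q \in At_P$). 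Then $P$ generates no arguments at all, so $X_Q = \emptyset$, while $\fw_{(\Delta^{+}, \Delta^{-})} = (\args_{P'}, \attrel_{P'})$ contains the unattacked argument $(q, \{q\}, \emptyset)$, which is accepted. Hence ``some argument with conclusion $Q$ is accepted in $\fw_{(\Delta^{+}, \Delta^{-})}$'' holds while ``some element of $X_Q$ is accepted'' fails. The ``canonical correspondence'' you invoke does not exist: an argument of $\args_{P'}$ that uses a fact of $\Delta^{+}$ as a subargument has no counterpart generated by $P$, since its support cannot be reconstructed from $P$; so accepted witnesses for $Q$ supplied by Theorem~\ref{thm:wu} need not project into $X_Q$.

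To your credit, the mismatch of argument pools you identified is real and the paper's proof silently ignores it; in the example above the left-hand side of the theorem holds while the right-hand side fails, so under the literal reading of $X_Q$ as $\{(C,R,N) \in \args_{P} \mid C = Q\}$ no bridging lemma can succeed. The statement only goes through if $X_Q$ is read as collecting all arguments with conclusion $Q$ arising in the instantiation (e.g. those generated by $P \cup U$, as the arguments listed in example~\ref{examp:finalexample} suggest), or if one assumes the relevant witnesses already lie in $\args_P$. Under that reading $X_Q \cap \args_{P'}$ is exactly the set of arguments of $\fw_{(\Delta^{+}, \Delta^{-})}$ with conclusion $Q$, and the theorem is immediate from Theorem~\ref{thm:wu} and Proposition~\ref{prp:individualobservation} with no extra lemma needed. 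As submitted, however, your proof hinges on a false auxiliary claim, so the gap is genuine.
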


\begin{proof}[Proof of theorem~\ref{thm:instantiation}]
Via theorem~\ref{thm:wu} and definitions~\ref{defn:hypothesisalp} and~\ref{defn:instantiation}.
\end{proof}

This theorem shows that our model of abduction in argumentation can indeed be seen as an abstraction of abductive logic programming.

\begin{example}~\label{examp:finalexample}
Let $\alp = (P, U)$ be the ALP as defined in example~\ref{example:alp}. 
All arguments generated by $\alp$ are:
\begin{tabular}{rclrcl}
$a$		& $ = $	&	$( p,	\{(p \leftarrow \negf s, r), r\}				, \{s\}		)$	&	$d$		& $ = $	&	$( r,	\{r\}									, \emptyset)$	\\
$b$		& $ = $	&	$( q,	\{(q \leftarrow \negf p)\}					, \{p\}		)$	&	$e$		& $ = $	&	$( s,	\{s\}									, \emptyset)$\\
$c$		& $ = $	&	$( p,	\{(p \leftarrow \negf s, \negf q)\}				, \{s,q\}	)$	&	&	&	\\
\end{tabular}

\noindent Given these definitions, the AAF in example~\ref{examp:abd} is equivalent to $\paf_{\alp}$.
In example~\ref{example:alp} we saw that $q$ is skeptically explained by $(\{s\},\emptyset)$ and $(\{s\}, \{r\})$, 
	while $(\emptyset, \{r\})$ only credulously explains it.
Indeed, looking again at example~\ref{examp:abd}, we see that $\fwb_1 = F_{(\{s\},\emptyset)}$ and $\fwb_3 = F_{(\{s\}, \{r\})}$ explain skeptical support for the observation $\{b\} = X_{q}$, while $\fwb_2 = F_{(\emptyset, \{r\})}$ only explains credulous support.
\end{example}

\begin{remark}
This method of instantiation shows that, on the abstract level, hypotheses cannot be represented by independently selectable abducible arguments.
The running example shows e.g. that $a$ and $d$ cannot be added or removed independently.
(Cf. remark~\ref{remark1}.)
\end{remark}

\section{Related work}\label{sec:relatedwork}

We already discussed Sakama's~\cite{sakama} model of abduction in argumentation and mentioned some differences.
Our approach is more general because we consider a hypothesis to be a change to the AF that is applied as a whole, instead of a set of independently selectable abducible arguments.
On the other hand, Sakama's method supports a larger range semantics, including (semi-)stable and skeptical preferred semantics.
Furthermore, Sakama also considers observations leading to rejection of arguments, which we do not.

Some of the ideas we applied also appear in work by Wakaki et al.~\cite{DBLP:conf/argmas/WakakiNS09}.
In their model, ALPs generate instantiated AFs and hypotheses yield a division into active/inactive arguments.  

Kontarinis et al.~\cite{kontarinis2013rewriting} use term rewriting logic to compute changes to an abstract AF with the goal of changing the status of an argument.
Two similarities to our work are:
(1) our production rules to generate dialogues can be seen as a kind of term rewriting rules. 
(2) their approach amounts to rewriting goals into statements to the effect that certain attacks in the AF are enabled or disabled.
These statements resemble the moves $\hpmoveattack{x}{y}$ and $\hpmoveattackneg{x}{y}$ in our system.
However, they treat attacks as entities that can be enabled or disabled independently.
As discussed, different arguments (or in this case attacks associated with arguments) cannot be regarded as independent entities, if the abstract model is instantiated.

Goal oriented change of AFs is also studied by Baumann~\cite{DBLP:journals/ai/Baumann12}, Baumann and Brewka~\cite{DBLP:conf/comma/BaumannB10}, Bisquert et al.~\cite{DBLP:conf/sum/BisquertCSL13} and Perotti et al.~\cite{DBLP:conf/tafa/BoellaGPTV11}.
Furthermore, Booth et al.~\cite{DBLP:conf/sum/BoothKRT13} and Coste-Marquis et al.~\cite{coste2013revision} frame it as a problem of \emph{belief revision}.
Other studies in which changes to AFs are considered include~\cite{DBLP:conf/argmas/BoellaKT09,cayrol2010change,DBLP:journals/ai/LiaoJK11,oikarinen2011characterizing}.

\section{Conclusions and Future work}\label{sec:conclusion}
We developed a model of abduction in abstract argumentation, in which changes to an AF act as explanations for skeptical/credulous support for observations.
We presented sound and complete dialogical proof procedures for the main decision problems, i.e., finding explanations for skeptical/credulous support.
In addition, we showed that our model of abduction in abstract argumentation can be seen as an abstract form of abduction in logic programming.

As a possible direction for future work, we consider the incorporation of additional criteria for the selection of good explanations, such as minimality with respect to the added and removed arguments/attacks, as well as the use of arbitrary preferences over different abducible AFs.
An interesting question is whether the proof theory can be adapted so as to yield only the preferred explanations.

\section{Acknowledgements}

Richard Booth is supported by the Fonds National de la Recherche, Luxembourg (DYNGBaT project).

\bibliography{ECAI-156}

\end{document}